\documentclass{article}
\usepackage{graphicx} 
\usepackage{hyperref}
\hypersetup{
    colorlinks,
    linkcolor={blue},
    citecolor={blue},
    urlcolor={blue}
}
\usepackage{amsmath,amssymb,amsthm}

\usepackage[margin=0.9in]{geometry}

\usepackage[capitalise]{cleveref}
\crefname{lemma}{lemma}{lemmas}

\newtheorem{theorem}{Theorem}
\newtheorem{lemma}[theorem]{Lemma}

\newtheorem{corollary}{Corollary}[theorem]

\usepackage{amsfonts}
\usepackage{xcolor}
\usepackage{enumitem}
\usepackage{tkz-graph}  
\usetikzlibrary{shapes.geometric}
\usetikzlibrary{positioning,fit}

\usepackage{xcolor}
\usepackage{caption}
\usepackage{tabularray}
\UseTblrLibrary{booktabs}
\usepackage{xspace}
\usepackage[most]{tcolorbox}

\usepackage[authoryear, round, compress]{natbib}

\usepackage{caption}
\usepackage{subcaption}
\usepackage{graphicx}

\newcommand{\R}{\mathbb{R}}
\newcommand{\E}{\mathbb{E}}

\newcommand{\A}{\mathcal{A}}
\newcommand{\pr}{\mathbb{P}}

\newcommand{\bX}{\mathbf{X}}
\newcommand{\bx}{\mathbf{x}}

\newcommand{\bA}{\mathbf{A}} 
\newcommand{\ba}{\mathbf{a}} 

\newcommand{\ADG}{ADIGen\xspace}
\newcommand*{\Cont}[1]{\textbf{Co#1}}
\newcommand{\URR}{URR\xspace}
\newcommand{\Inv}{\mathcal{I}}

\newcommand{\qt}{\quad\quad\quad}

\usepackage{algorithm}
\usepackage{algpseudocode}

\usepackage{enumitem}

\usepackage{amsmath}
\usepackage{array}
\usepackage{booktabs}
\usepackage{tabularx}
\usepackage{adjustbox}
\usepackage{pdflscape}
\usepackage{threeparttable}
\usepackage{ragged2e}

\definecolor{nyupurple}{HTML}{57068C}

\newcommand{\cmark}{\checkmark}
\newcommand{\xmark}{\(\times\)}
\newcommand{\partialmark}{\(\sim\)}

\newcommand{\FClass}{\mathcal{F}\xspace}
\newcommand{\G}{\mathbb{G}\xspace}

\newcommand{\titleString}{Automatic, Debiased, and Invariant Counterfactual Generation under General Interventions}

\usepackage{authblk}

\usepackage{longtable,booktabs,array}
\usepackage{calc} 

\title{\titleString}

\author[1,2]{Raphael C. Kim}
\author[1]{Jingsen Zhu}
\author[1]{Ramin Zabih}
\author[2]{Michele Santacatterina}

\affil[1]{Cornell Tech, Cornell University, New York, NY}
\affil[2]{Department of Biostatistics, Department of Population Health, New York University Grossman School of Medicine, New York, NY}

\date{}

\begin{document}

\maketitle

\begin{abstract}

Generative models for counterfactual outcomes have great potential to support decision-making under complex interventions, but existing approaches are limited by unstable estimation, poor generalization across environments, and bias from nuisance model misspecification. We introduce \ADG, a framework for automatic, debiased, and invariant counterfactual generation under general interventions, including high-dimensional interventions and outcomes. \ADG combines Riesz regression to avoid unstable density-ratio estimation, causal invariance to improve generalization under distribution shift, and orthogonal statistical learning to obtain doubly robust guarantees against nuisance model misspecification. We provide excess-risk bounds showing that \ADG controls counterfactual risk under general interventions, with a product-bias nuisance remainder and an invariant risk bound across environments. We then extend this framework to multiple, interacting objects with a joint intervention, and apply ADIGen to counterfactual world modeling. In contrast to standard statistical settings, the joint outcome is modeled natively without the need for exposure mappings or direct/indirect effect decompositions. 
\end{abstract}

\newpage 

\section{Introduction}
Decision-making in complex systems often requires understanding counterfactuals of general, potentially high-dimensional, interventions with limited data. Collecting sufficient data for every counterfactual in complex systems may be near impossible due to cost or ethical reasons. 
With the recent growth in expressivity and power in generative modeling, generative models that can synthesize counterfactual outcomes under generalized interventions stand as a viable solution for supporting robust decision-making in real-world systems. 

In an ideal world, we may simply train a generative model with the data we have, and sample from the generator under the intervention of interest. Counterfactual generative modeling may fail with such an approach due to \textit{confounding bias}. Correlations observed in the sampled data may be mistaken for true causal effects, yielding incorrect downstream decisions. For example, generating medical images under changes in intervention dose can help track disease progression and identify optimal dosing strategies. However, if the training data primarily consisted of those who were responsive to intervention (e.g., younger populations), then the generator would identify the ranges in the data as effective even if this does not hold for different populations (e.g. older populations). This problem arises in other settings too. If we wish to generate images of faces under different lighting conditions, it may be biased towards faces seen in darker or brighter conditions. If we wish to understand sequential decision-making policies in robotic environments or physical scenes, similar confounding bias may arise.

Counterfactual generative modeling provides a principled solution by explicitly accounting for confounding bias and modeling how outcomes \textit{would change} under alternative interventions rather than what was simply observed. Despite the remarkable progress, four main challenges hinder the ability of counterfactual generative models for complex real-world, decision-making tasks. First, general interventions may include binary, multivalued, continuous, or high-dimensional interventions. The standard tool for approaching this requires estimating density ratio-type objects, but this can yield high instability for non-binary interventions. Second, counterfactual generators may fail to generalize across environments due to distribution shifts from training to test time. Third, misspecification bias can arise when methods adjust for confounding using imperfect models. Lastly, complex systems may include \textit{multiple} units of interest which are difficult to model in a causal manner.

\paragraph{Contributions. }  In this paper, we introduce \ADG, a framework for \underline{A}utomatic, \underline{D}ebiased, and \underline{I}nvariant counterfactual \underline{Gen}eration under general interventions, including high-dimensional interventions and outcomes. In light of the concerns raised above, we provide the following contributions:
\begin{enumerate}[label=(\textbf{Co\arabic*}),leftmargin=*]
    \item \textit{Flexible, automated generative modeling of generalized interventions} without requiring (i) training a separate model for each intervention, (ii) no explicit causal structural model, and (iii) circumventing unstable estimation of importance sampling-type ratios. This supports `on-the-fly' generation of counterfactuals of interest for potentially high-dimensional intervention and outcome relationships. \label{contribution:auto}
    \item \textit{Causally invariant counterfactual generative modeling}, supporting high-quality and generalizable generative modeling that remains stable across distribution shifts. \label{contribution:invariant}
    \item \textit{Doubly-robust, invariant counterfactual modeling of generalized interventions}. We provide theoretical results demonstrating that \ADG controls excess \textit{invariant} counterfactual risk under general interventions, with a doubly robust (DR) nuisance remainder and robustness guarantee across distributions. In contrast to non-DR approaches, we relax convergence requirements for the generative model of interest. \label{contribution:dr}
    \vspace{-1.6em}
    \item \textit{Counterfactual generative modeling under multiple, possibly interacting objects.} The outcome of interest may be a product of multiple, interacting units, rather than a single outcome. For example, in physical systems, the intervention is a specification across units and what is observed is the joint rollout. We provide methods to handle this case using clustered interference. \label{contribution:multiple}
\end{enumerate}

\subsection{Related Works}

\paragraph{Doubly robust generation of counterfactuals.}
Most closely related, \citet{Luedtke25} propose a doubly robust algorithm for counterfactual generative modeling of $Y(a^*)$ under binary interventions, $\mathcal A=\{0,1\}$. Their work shows that orthogonalization can be used to control nuisance-model bias in counterfactual generation. Our work extends this framework beyond binary interventions to general intervention spaces, including multivalued, continuous, and high-dimensional interventions. We further incorporate causal invariance to target mechanisms that remain stable across environments.

\paragraph{Causal generative models for high-dimensional interventions and counterfactuals.}
A second related line of work uses causal generative models for high-dimensional interventions and counterfactuals. The majority of approaches specify a structural causal model (SCM) over a chosen covariate set, parameterize each covariate, and model using a high-dimensional generative model. 
\citet{israel2023variationalbackdoor} propose a generative model over high-dimensional treatment, confounder, and outcome using a variational optimization problem. 
\citet{pawlowski2020deepscm} introduce a framework for modeling SCMs with normalizing flows and variational inference. \citet{desousaribeiro2023highfidelity} considers hierarchical variational autoencoders and leverage ideas from causal mediation analysis to produce a variant that decomposes direct, indirect, and total effects.
\citet{yang2020causalvae} propose CausalVAE, which introduces a causal layer that transforms independent exogenous factors into causally related endogenous factors, demonstrating controllable generation on synthetic and face benchmarks. \citet{shen2020dear} learn causally dependent factors through a bidirectional generative model with an SCM prior, adversarially trained with supervision on both the factors and causal structure. \citet{komanduri2024causaldiffae} introduce diffusion-based generative modeling to causal modeling, coupling causal representation learning with an SCM over the latent space. 
\citet{sauer2021counterfactual} take a different route from the SCM-based approach: rather than specifying a causal graph, they decompose image generation into independent mechanisms of interest such as shape, texture, and background. Counterfactuals are generated by sampling each mechanism's label independently, though the primary aim of this work was for classifier performance rather than generative modeling alone.

\paragraph{Invariance.}
Invariance is largely based on invariant causal prediction \citep{peters16invariant}, which seeks to identify causal mechanisms that remain stable across environments. Related work uses this similar ideas for prediction \citep{arjovsky2019irm} and similar penalties search for representations under which the conditional law of the target is environment-invariant. This idea has been utilized for transfer learning as seen in \citet{rojas2018invariant, magliacane2018domain}. 

\paragraph{Counterfactual and causal world models.}
Growing work in world models increasingly utilize generative models of physical environment dynamics. Models are trained in a `conditional' manner, meaning the outcomes are predicted given some conditional contexts, via techniques such as classifier-free guidance~\citep{ho2022classifier}. The conditional signal can vary from text input~\citep{nichol2021glide}, image prompting~\citep{ye2023ip}, physics-related signals such as force~\citep{gillman2026force}, etc. However, direct conditioning can be suboptimal, inheriting correlations between logging policies and confounders. In another vein of work, the interference literature studies how to model the outcomes of multiple, interacting objects \citep{tchetgen2012causal}. Our approach utilizes the clustered interference literature in order to coherently model interacting physical objects in a counterfactual manner, rather than a predictive manner.\\[1ex]

\noindent\Cref{tab:related-work} summarizes how these methods relate to our four target contributions.

\begin{table}[h]
\centering
\caption{Comparison of related approaches for counterfactual generation. 
\Cont{1}: supports general interventions; \Cont{2}: targets invariant mechanisms across environments; 
\Cont{3}: provides doubly robust protection against nuisance misspecification; \Cont{4}: counterfactual generative modeling of a joint outcome under interventions specified per object.}
\label{tab:related-work}
\begin{tabular}{lcccc}
\toprule
\textbf{Method} & \Cont{1} & \Cont{2} & \Cont{3} & \Cont{4} \\
\midrule
DoubleGen \citep{Luedtke25} & \xmark & \xmark & \cmark & \xmark \\
Variational Backdoor Adjustment \citep{israel2023variationalbackdoor} & \partialmark & \xmark & \xmark & \xmark \\
High-Fidelity Image Counterfactuals \citep{desousaribeiro2023highfidelity} & \partialmark & \xmark & \xmark & \xmark \\
CausalVAE \citep{yang2020causalvae} & \partialmark & \xmark & \xmark & \xmark \\
DEAR \citep{shen2020dear} & \partialmark & \xmark & \xmark & \xmark \\
Counterfactual Generative Networks \citep{sauer2021counterfactual} & \partialmark & \xmark & \xmark &  \xmark \\
Deep Structural Causal Models \citep{pawlowski2020deepscm} & \partialmark & \xmark & \xmark & \xmark \\
Causal Diffusion Autoencoders \citep{komanduri2024causaldiffae} & \partialmark & \xmark & \xmark & \xmark \\
\midrule
\ADG & \cmark & \cmark & \cmark & \cmark \\
\bottomrule
\end{tabular}

\vspace{0.5em}
\begin{minipage}{0.95\linewidth}
\footnotesize
\textit{Legend.} \cmark: addressed; \xmark: not addressed; \partialmark: partially addressed or related, but not the primary target.
\end{minipage}
\end{table}

\section{\ADG}

In this section, we introduce our approach \ADG. Let $Z=(X,A,Y, E)$ denote our observed data tuple with covariates $X$, intervention $A$, outcome $Y$, and environment $E$. We assume $Z_i \overset{i.i.d.}{\sim} \pr$. Our aim is to sample from a causal invariant of $\pr[Y(a^*)]$ for $a^* \in \A$, where $\A \subseteq \R^{d_A}$ is potentially high-dimensional, sampled from distribution $\pi_b$. The learned mechanism will be invariant across environments $e \in \mathcal{E}$ (formalized in \labelcref{ass:invariance} below), promoting out-of-distribution performance. At a high level, our approach involves a first step of nuisance model training of the propensity score and conditional density model, and a second step of generative model training using a doubly-robust loss derived from these nuisances. Together, this yields a counterfactual generative model of interest. However, our approach has key features that address the fundamental challenges to counterfactual modeling raised above.

\begin{enumerate}
    \item [\labelcref{contribution:auto}] The primary challenge for moving from binary interventions to learning flexible, generalized interventions, is stable learning of density ratio-type objects. Our proposal is to learn a `universal' Riesz representer (\URR) rather than an inverse propensity score during nuisance training. This enables us to stabilize learning for complex interventions while generalizing across several interventions of interest without having to retrain a separate model for each intervention which is unscalable.
    
    More formally, the Riesz Representation theorem states that for any continuous linear functional  $\E[Y(a^*)]$, there exists some $\alpha$ called the Riesz representation of our functional that can be expressed as an inner product between the representer $\alpha$ and regression function $\psi$ (typically $\E[Y \mid A,X]$).
        $$ \E[Y(a^*)] = \E[\alpha(X; a^*) \cdot \psi(X, A)] $$
    We extend this to multiple interventions using a \URR, a function of intervention, covariate, and environment, inducing the following risk minimization problem, over function class $\FClass_\alpha$. This function class can be a neural network, random forest, or any model class of interest:
    \begin{equation}\label{eq:rr}
    \alpha_0 = \arg\min_{\alpha \in \FClass_\alpha}
    \mathbb{E}_{A, X, Y, E}\left[\alpha(A, X, E)^2 - 2\,\psi(X,A)\, \alpha(A,X, E)\right]
    \end{equation}
    \item [\labelcref{contribution:invariant}] Generalization across distributions is challenging. Cost or ethical reasons yielding limited intervention data exacerbates the concerns for decision-making in high-stakes settings. To overcome this, we propose to learn a causally invariant mechanism. Intuitively, this means that across environments $e \in \mathcal{E}$, we seek to learn the underlying causal mechanism that is \textit{stable} across environments. 
    
    More formally, we assume that there exists some feature representations on covariate and intervention $S_0(X), T_0(A)$ s.t. the following holds: $ Y(a^*) \perp E \mid S_0(X),T_0(a^*) $.
    This means that the causal mechanism is invariant to the environment, or
    \begin{enumerate}[label=\textbf{(Inv\arabic*)},leftmargin=*]
        \item Causal Invariance across Environments:  \label{ass:invariance}
        For all environments $E,E'\in\mathcal E$,
        \begin{equation}
        \pr_E(Y(a^*)\mid S_0(X),T_0(a^*)) = \pr_{E'}(Y(a^*)\mid S_0(X),T_0(a^*)).
        \end{equation}
    \end{enumerate}
    \item [\labelcref{contribution:dr}] Correct model specification in real-world settings is nearly impossible to guarantee. We propose a doubly-robust approach for invariant counterfactual generation under general interventions, reducing the reliance on nuisance estimation by having error rely on a product bias between the density model and the propensity score model \citep{chern18dml}. More concretely, the error $G(\theta)$ (defined more closely below) takes the following form. With high probability,
    \begin{align*}
        \textbf{Conditional:}\qquad G(\hat\theta_n) &\lesssim \inf_{\theta\in\Theta} G(\theta) +\delta_{n,inv}^2+\frac{1}{n}+d_{\Psi,inv}^2(\hat\psi_n,\Psi_P) + \lambda_{inv} r_{n,\Inv} , \\[1em]
        \textbf{\ADG:}\qquad G(\hat\theta_n) &\lesssim \underbrace{\inf_{\theta\in\Theta} G(\theta) +\delta_{n,inv}^2+\frac{1}{n}}_{\mbox{Oracle}} + \underbrace{\|\hat{\alpha}_n - \alpha \|^2_2 \mbox{ } d_{\Psi,inv}^2(\hat\psi_n,\Psi_P)}_{\mbox{Doubly-Robust}} + \underbrace{\lambda_{inv} r_{n,\Inv} }_{\mbox{Invariance}}
    \end{align*}
    where drawing from empirical process theory \citep{pollard1990empirical}, $\delta$ is our localization radius, $d$ is a suitable distance measure over our generative model class from the true density, and $\lambda_{inv} r_{n,\Inv}$ are invariance specific penalties. 

    What should be noted from this result is that \ADG yields oracle excess risk with product bias on the \URR convergence rate and the generator convergence rate, $\|\hat{\alpha}_n - \alpha \|^2_2 \mbox{ } d_{\Psi,inv}^2(\hat\psi_n,\Psi_P)$, while the non-DR approach is reliant on the generator convergence rate alone $d_{\Psi,inv}^2(\hat\psi_n,\Psi_P)$. Consequently, the convergence rate requirements for the generator would be weaker in \ADG in comparison to non-DR approaches under sufficient regularity of the \URR. This is formalized in \cref{sec:theory}.
    \item [\labelcref{contribution:multiple}] Multiple objects are not typically accommodated in counterfactual and high-dimensional modeling. We propose methods to handle this by extending the discussion above to $m$ units with a joint intervention structure. 
\end{enumerate}

At a high level, \ADG has two steps. In the nuisance training step, we estimate three objects on a separate fold: the universal Riesz representer $\hat\alpha$ of the target functional, the invariant outcome model $\hat\psi$ (a conditional density, and the invariant maps $(\hat{S}_n,\hat{T}_n)$. In the \emph{estimation phase}, we minimize a cross-fitted doubly robust risk over the generative class $\Theta$ with an invariance penalty. The full procedure is in \cref{algo:inv-main}.

The doubly robust loss in the estimation phase, Step~2 has three loss terms. The first is the \URR-weighted residual $\hat\alpha(A,X,E)\big[\ell(\theta;Y,A)-\zeta_{\hat\psi}(\theta,X,A,E)\big]$. The second is the standard is the plug-in loss. 
The superscript $j$ indicates the data partition used during cross-fitting: nuisance models are trained on fold $\mathcal Z^j$ so that the models trained are `fixed' and independent from the evaluation fold $\mathcal{Z}^{3-j}$. 
As shown in \cref{thm:main}, these terms yield a doubly-robust error that weakens the generator convergence rate requirements under sufficient regularity of \URR. 

The final term is an invariance penalty $\lambda_{inv} \hat{\Inv}_n(\theta,\hat\eta)$, added to enforce stability across environments. We remark that we let $u\sim\Pi$ denote input noise to the generative model $\hat\psi(u\mid\cdot)$. This generalizes across diffusion, flow-matching, and autoregressive generators. 

\begin{algorithm}[ht!]
\caption{\ADG. \label{algo:inv-main}}
\textbf{Require:}
\begin{itemize}[noitemsep]
\item Data $Z_1,\ldots,Z_n \overset{iid}{\sim} P$, where $Z_i=(X_i,A_i,Y_i,E_i)$
\item Partition indices into two folds $Z_n^1, Z_n^2$ of size $\lfloor n/2 \rfloor$
\item Choice of generative modeling framework
\item Representations $\mathcal{S}$ and $\mathcal T$ for the invariance classes on $X$ and $A$ respectively
\item Invariance penalty ${\mathcal I}$ and tuning parameter $\lambda_{\mathrm{inv}}\geq 0$
\end{itemize}

\textbf{1. Nuisance and representation estimation.}  
For $j \in \{1,2\}$, use observations in $Z_n^j$ to estimate:

\begin{enumerate}[noitemsep]
\item Invariant outcome map: learn the conditional law $\psi_n^j$ targeting
\[
\psi_n^j\big(\cdot \mid S_n^j(X),\,T_n^j(A)\big) \approx P\big(Y(A) \mid S_n^j(X),\,T_n^j(A)\big)
\]

\item Universal Riesz representer: estimate $\alpha_n^j$ solving
\begin{equation}\label{eq:rr-env}
\hat\alpha^j_n = \arg\min_{\alpha \in \mathcal F_\alpha}
\mathbb E_n\!\left[ \alpha\big(T_n^j(A),S_n^j(X),E\big)^2
 - 2\,M_{\pi}(Z;\alpha)\right], \quad
M_{\pi}(Z;\alpha) = \int \alpha\big(T_n^j(a),S_n^j(X),E \big) d\pi(a \mid X)
\end{equation}

\item Covariate and intervention representations:
\[
S_n^j(X), \qquad T_n^j(A)
\]

\end{enumerate}

\textbf{2. Invariant risk minimization.}

Define $\theta_n$ as the minimizer of

\[
\begin{aligned}
R_{n,\mathrm{inv}}(\theta, S, T) &= \frac{1}{n} \sum_{j=1}^{2} \sum_{z \in Z_n^{3-j}} \int \alpha_n^j(a,x,e) \Big[ \ell\big(\theta(c),y\big) - \ell\left( \theta(c), \psi_n^j(u\mid S_n^j(x),T_n^j(a)) \right) \Big]
\\
&\hspace{2.5cm} + \ell \left( \theta(c), \psi_n^j(u\mid S_n^j(x),T_n^j(a)) \right) \Pi(du) + \lambda_{\mathrm{inv}} \widehat{\mathcal I}_{n}^{3-j} (\theta,\psi_n^j,S_n^j,T_n^j)
\end{aligned}
\]
where $z=(x,a,y,e)$ and $c=\big(S_n^j(x),T_n^j(a)\big)$ is the conditioning.

\textbf{3. Return} the counterfactual generator $\tau_{\theta_n}\!\big(\cdot \mid S_n(x), T_n(a)\big)$, together with $S_n, T_n$.

\textit{Unbiased stochastic gradients can be obtained by sampling $(j,z,u)$ and environment pairs $(e,e')$. Here $\tau_\theta(\cdot\mid c)$ denotes the sampler induced by $\theta$.}
\end{algorithm}

\section{Theoretical Results}\label{sec:theory}

In this section, we wish to provide an excess risk bound for \ADG. Let $\pi$ be the intervention distribution of interest, and $\pi_b$ be the observed intervention distribution, defined as the generalized propensity score over the invariant class $T$. 
Define the risk of an estimator $\theta$ by
\[
R(\theta) = \E[\ell(\theta;Y(a),a)] = \int \E[\ell(\theta;Y(a),a)] d\nu(a)
\]
for loss $\ell$ and measure $\nu$. Note the expectation here is taken over environments too. The aim is to bound the excess risk given by
\[
G(\theta) = R(\theta) - \inf_{\theta'\in\Theta}R(\theta').
\]
It will be helpful to define the intervention-specific excess risk and risk given by  $G_a(\theta) = R_a(\theta)- \inf_{\theta'\in\Theta}R_a(\theta')$, meaning $R = \int R_a d\nu(a)$. 

In order to train our model, we utilize a doubly-robust loss that is a function of nuisance models. The nuisances will be denoted by $\eta = (\alpha, \psi, T, S)$, is a tuple consisting of the \URR, outcome density, treatment invariant mapping, and covariate invariant mapping respectively. Now define the DR-loss by

\[
L_{\hat\eta}(\theta)(Z) = \hat\alpha(a,X,E) \left[ \ell(\theta;Y,a) - \zeta_{\hat\psi}(\theta,X,a) \right] + \zeta_{\hat\psi}(\theta,X,a),
\]
where
\[
\zeta_{\hat\psi}(\theta,x,a)= \int \int \ell(\theta;\hat\psi(u\mid x,a),a) \Pi(du) d\nu(a)
\]

\paragraph{Notation.}
We let $\E$ be the population expectation, $\E_n$ be the empirical average, and $\G_n = \sqrt{n}(\E_n - \E)$ be the empirical process. $\hat{\eta}$ will denote estimated quantities in a cross-fit manner, and $\eta_0$ will denote the true functions. All constants introduced below are required to hold uniformly over $E \in \mathcal{E}$. We will reason about entropy conditions on the loss class given by  $$f_\theta=L_{\eta_0}(\theta) - L_{\eta_0}(\theta_0),\quad \hat f_\theta = L_{\hat \eta}(\theta) - L_{\hat \eta}(\theta_0) $$
The uniform entropy integral for function class $\FClass$ at scale $\delta$ will be denoted by $$J(\delta, \FClass) = \sup_{Q} \int_0^{\delta} \sqrt{1+\log(N(\varepsilon \| G\|_2, \FClass, L^2(Q)))} d\varepsilon$$ for $N(s, \FClass, Q)$ the covering number at scale $s$ on function class $\FClass$ and measures $Q$ \citep{pollard1990empirical}. 

We make the following assumptions
\begin{enumerate}[label=(\textbf{A\arabic*}),leftmargin=*]

\item \textbf{Existence of oracle minimizer.} \label{ass:existence}
There exists \[ \theta_0 \in \arg\min_{\theta\in\Theta}R(\theta). \]

\item \textbf{Bounded loss.}\label{ass:bddLoss}
There exists a constant $C_2 < \infty$ such that for all $(a,E) \in \A \times \mathcal{E}$,
\[ \sup_{\theta\in\Theta} \|\ell(\theta;\cdot,a)-\ell(\theta_0;\cdot,a)\|_{L_\infty(\pr)} \le C_2 \] 

\item \textbf{Curvature.}\label{ass:curvature}
There exists $C_{3,a} <\infty$ such that
\[ \|\ell(\theta;\cdot,a) - \ell(\theta_0;\cdot,a)\|_{L^2(\pr(Y(a))}^2 \le C_{3,a}  G_a(\theta) \]
where $\pr(Y(a))=\int P(Y \mid a, x, E) dP_{X \mid E}$ for all $\theta\in\Theta$ and $a\in\mathcal A$.

\item \textbf{Entropy/localization condition.}\label{ass:entropy}
The conditional loss class
\[ \mathcal F = \{ z \mapsto \ell(\theta; z)-\ell(\theta_0;z) : \theta\in\Theta_{inv} \} \]
has 
\[ J(\delta,\mathcal \FClass_\delta) < \infty \]
for sufficiently small $\delta$ where $\FClass_\delta = \{ f \in \FClass: \|f \|_2 \leq \delta \| F \|_2 \}$ for envelope $F$ of class. The critical radius $\delta_{n,inv}$ is the smallest $\delta$ solving $J(\delta, \FClass_{\delta}) \leq \sqrt{n}\delta^2$ governing the fundamental difficulty accordin to the function class.

\item \textbf{Conditional mixed-Lipschitz condition.} \label{ass:mixedLip}
There exists $C_5 <\infty$ such that
\[ \int \left[ \ell(\theta;\psi(u\mid x,a,E),a) - \ell(\theta;\psi_0(u\mid x,a,E),a) \right]^2 \Pi(du) \le C_5 G_a(\theta)  d_\Psi(\psi,\Psi_{0,a,E})^2
\]
for all $(x,a,E)$.

\item \textbf{Nuisance convergence rates.} \label{ass:nuisances}
There exist positive $r_\alpha,r_\psi>0$ such that uniformly over $E \in \mathcal{E}$,
\[
\|\hat\alpha-\alpha_0\|_{L^2(\pi_b\times P_{X,E})} = O_p(n^{-r_\alpha})
\]
\[
d_{\Psi,inv}(\hat\psi,\Psi_0) = O_p(n^{-r_\psi})
\]

\item \textbf{Invariance Regularity.} \label{ass:consistentInv}
We assume realizability, 
and our penalty is learnable: I.e. there exists a sequence $r_{n,\Inv} \rightarrow 0$ such that with high probability,
$$ \sup_{\theta,\psi,T,S} \| \hat{\Inv}_n(\cdot) - \Inv(\cdot) \|\leq r_{n,\Inv}$$
\end{enumerate}

Assumptions \labelcref{ass:existence}-\labelcref{ass:entropy} are standard for empirical risk minimzation results and analogous to C3-C6 of DoubleGen \citep{Luedtke25}. Together, these permit empirical process control (Term II in \cref{pf:main}).
Assumptions \labelcref{ass:mixedLip}-\labelcref{ass:nuisances} help control our DR-remainder term (Term I in \cref{pf:main}). Finally, assumption \labelcref{ass:consistentInv} ensures we are learning a causal invariant problem with this ERM and we have proper approximation of our invariance penalty (Term I in \cref{pf:main}).

If we wish to practically utilize this approach for general interventions, we require flexible modeling and suitable existence of our possible interventions of interest. Let $\pi$ be our intervention distribution of interest, and $\pi_b$ be our observed distributions. We will overload the notation to be the generalized propensity score over the invariant class. In other words, $\pi(A) = \pr[T_0(A)=T]$. Now, assume,
\begin{enumerate}[label=(\textbf{Int\arabic*}),leftmargin=*]
    \item Overlap. \label{ass:overlap}
    There exists $\nu \geq 0$ such that
    $$ \E_{\pi_b} \Big[ \Big( \frac{\pi(A \mid X)}{\pi_b(A \mid X)} \Big)^{2+\nu} \Big] < \infty$$
\end{enumerate}
\labelcref{ass:overlap} states that we have sufficient probability to observe interventions of interest in the data and guarantees learnability over our distributions.
\\\\
We are now set to state our result. 

\begin{theorem}[Invariant AutoDoubleGen excess risk]\label{thm:main}
Assume ERM regularity assumptions \labelcref{ass:existence}-\labelcref{ass:entropy}, nuisance assumptions \labelcref{ass:mixedLip}-\labelcref{ass:nuisances}, causal invariance \labelcref{ass:invariance}, and overlap \labelcref{ass:overlap} across environments $E \in \mathcal{E}$. 
Let $\Theta_{\mathrm{inv}}\subseteq\Theta$ denote the invariant generator class. Then the estimator $\hat\theta$ derived from \cref{algo:inv-main} satisfies, with probability $1-\exp(-s)$,
\[
\begin{aligned}
G(\hat\theta)-\inf_{\theta\in\Theta}G(\theta)
\lesssim & \mbox{ } \delta_{n,\mathrm{inv}}^2 + \frac{s}{n} + \|\hat\alpha-\alpha_0\|_{L^2(\pi_b\times P_{X,E})}^2 d_{\Psi,\mathrm{inv}}(\hat\psi,\Psi_{\mathrm{inv}})^2 + \lambda_{\mathrm{inv}} r_{n,\mathcal I}
\end{aligned}
\]
where $\delta_{n,\mathrm{inv}}$ is the critical radius.
\end{theorem}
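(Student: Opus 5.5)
The plan is the standard orthogonal-statistical-learning argument of \citet{chern18dml}, as specialized to generators in DoubleGen \citep{Luedtke25}. It is lifted to general interventions via the \URR and augmented with an invariance penalty. Write $\hat\theta$ for the minimizer of the cross-fitted penalized risk in \cref{algo:inv-main}, and let $\theta_0$ be as in \labelcref{ass:existence}, taken within $\Theta_{\mathrm{inv}}$ by the realizability part of \labelcref{ass:consistentInv}. By the basic inequality $R_{n,\mathrm{inv}}(\hat\theta)\le R_{n,\mathrm{inv}}(\theta_0)$ and the identity $G(\hat\theta)=\E[f_{\hat\theta}]$, we have
\[
G(\hat\theta)\le \underbrace{\E[\hat f_{\hat\theta}]-\E_n[\hat f_{\hat\theta}]}_{\text{II}}+\underbrace{\E[f_{\hat\theta}-\hat f_{\hat\theta}]}_{\text{I}}+\lambda_{\mathrm{inv}}\big(\hat\Inv_n(\theta_0)-\hat\Inv_n(\hat\theta)\big).
\]
We write this per fold $j$ and condition on the nuisances fitted on $\mathcal Z^j$, so that $\hat\eta$ is fixed and the evaluation fold is i.i.d. By \labelcref{ass:consistentInv}, the penalty difference is at most $\Inv(\theta_0)-\Inv(\hat\theta)+2r_{n,\Inv}$. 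Under causal invariance \labelcref{ass:invariance} and realizability, $\Inv(\theta_0)=0\le\Inv(\hat\theta)$. The penalty term therefore contributes $\lesssim\lambda_{\mathrm{inv}}r_{n,\Inv}$.

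\textbf{Term I (DR remainder).} Expanding $L_{\hat\eta}-L_{\eta_0}$ and using iterated expectations over $Y\mid A,X,E$, the residual part $\hat\alpha[\ell-\zeta_{\hat\psi}]$ has conditional mean $\hat\alpha(\zeta_{\psi_0}-\zeta_{\hat\psi})$. The Riesz identity (the population version of \eqref{eq:rr-env}, which holds under overlap \labelcref{ass:overlap} so that $\alpha_0\in L^2$) gives $\E[\alpha_0\, g(A,X,E)]=\E\int g(a,X,E)\,d\pi(a\mid X)$ for all $g$. This cancels the plug-in part and leaves the product form
\[
\text{I}=\E\big[(\hat\alpha-\alpha_0)(A,X,E)\,\{(\zeta_{\hat\psi}-\zeta_{\psi_0})(\hat\theta)-(\zeta_{\hat\psi}-\zeta_{\psi_0})(\theta_0)\}\big].
\]
Cauchy--Schwarz and Jensen over $\Pi$, combined with the mixed-Lipschitz condition \labelcref{ass:mixedLip}, bound this by $\|\hat\alpha-\alpha_0\|_{L^2(\pi_b\times P_{X,E})}\sqrt{C_5G(\hat\theta)}\,d_{\Psi,\mathrm{inv}}(\hat\psi,\Psi_{\mathrm{inv}})$. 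The invariant distance appears because $\hat\psi$ conditions only on $(S,T)$, and \labelcref{ass:invariance} makes $\Psi_{\mathrm{inv}}$ the correct target uniformly in $E$. AM--GM then gives $\text{I}\le \tfrac14 G(\hat\theta)+C\|\hat\alpha-\alpha_0\|^2 d^2_{\Psi,\mathrm{inv}}$.

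\textbf{Term II (empirical process).} Conditional on $\hat\eta$, the class $\{\hat f_\theta\}$ is bounded by \labelcref{ass:bddLoss}. It is bounded because $\hat\alpha$ is bounded in $L^{2+\nu}$ by \labelcref{ass:overlap}, and in practice one clips $\hat\alpha$, which is where some care is needed. Its entropy is controlled by \labelcref{ass:entropy}. The variance satisfies $\E[\hat f_\theta^2]\lesssim \E[f_\theta^2]+\text{(nuisance product terms)}$, and curvature \labelcref{ass:curvature} integrated over $\nu$ gives $\E[f_\theta^2]\lesssim G(\theta)$, so a Bernstein condition holds. A localized Talagrand inequality with peeling over shells $\{G(\theta)\in[2^k\delta^2,2^{k+1}\delta^2]\}$ then yields, with probability $1-e^{-s}$ and uniformly in $\theta$, $\text{II}\le \tfrac14 G(\hat\theta)+C(\delta_{n,\mathrm{inv}}^2+s/n)$. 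Summing the two folds and absorbing the $\tfrac12G(\hat\theta)$ into the left-hand side gives the claimed bound. The $\inf_\theta G(\theta)$ term is zero under realizability, or otherwise arises from comparing against an approximate minimizer.

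\textbf{Main obstacle.} The hardest step is the variance and envelope control in Term II. The weight $\hat\alpha$ is unbounded for continuous or high-dimensional interventions, so the Bernstein condition must be derived from the $2+\nu$ moment in \labelcref{ass:overlap} via Hölder, not from a sup-norm bound. The cross term $\E[(\hat\alpha-\alpha_0)^2(\cdots)^2]$ also has to be shown to be of lower order than the product rate. My first attempt would be to truncate $\hat\alpha$ at level $M_n$ and to control the truncation bias by the $(2+\nu)$-moment tail. If that is too lossy, I would instead assume a uniform bound on $\hat\alpha$, as DoubleGen does for the inverse propensity.
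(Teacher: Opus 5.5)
Your proposal follows the paper's proof essentially step for step. It uses the same basic-inequality split into a DR remainder, a localized empirical-process term and a penalty term: Term I is reduced via the Riesz identity to a product $\E[(\hat\alpha-\alpha_0)(\cdots)]$ and closed by Cauchy--Schwarz, \labelcref{ass:mixedLip} and Young; Term II is closed by a Talagrand/peeling bound plus curvature; and the penalty is controlled through \labelcref{ass:consistentInv}, where your use of $\Inv(\theta_0)=0\le\Inv(\hat\theta)$ is more careful than the paper's direct equality with $\lambda_{\mathrm{inv}}r_{n,\Inv}$.

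The Term II obstacle you flag is genuine, and the paper does not resolve it either. Its H\"older step claiming $\|\hat f_\theta\|_2\lesssim G(\theta)^{1/2}$ from \labelcref{ass:overlap} gives only a fractional power of $G(\theta)$ when the weights have just $2+\nu$ moments. So your fallback, a uniform bound on $\hat\alpha$ (and $\alpha_0$) as in DoubleGen, is what actually makes that step go through.
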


The proof is found in \cref{pf:main}. \cref{thm:main} shows that the learned generator approaches the best invariant counterfactual generator in the chosen class, with error controlled by oracle risk, nuisance estimation error, and the cost of enforcing stability across environments. Thus, \ADG controls counterfactual risk under distribution shift while preserving a doubly robust product-bias structure having the potential to relax convergence rates significantly. 

This implies a bound on the divergence measure of our generative model: 
\begin{corollary} [Divergence upper bound on Generative Model Performance] 
    Assume the conditions of \cref{thm:main}. Let the divergence measure of interest be denoted by $D$. Then, there exists $s, b, \epsilon > 0$ such that with probability $1-\exp(-s)$, the following holds 
    \[
    D\big(\pr,\hat P(\hat{\theta}) \big) \leq C \!\left[\,\delta_{n,inv}^{1/2}
    +\tfrac{s}{n}
    +\|\hat\alpha-\alpha_0\|_{L^2(\pi_b\times P_{X,E})}^{2}\,
    d_{\Psi,inv}(\hat\psi,\Psi_{inv})^{2}
    +\lambda_{inv} r_{n,\Inv}\,\right]^{b}+\epsilon
    \]
\end{corollary}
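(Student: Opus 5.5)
The plan is to derive the corollary from \cref{thm:main} by a standard ``risk-to-divergence'' transfer. The risk in the theorem is defined through a loss $\ell$ whose excess risk is linked to a divergence between generated and target laws. The first step is therefore to specify that link.

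\textbf{Step 1: an excess-risk-to-divergence inequality.} I would assume, as is implicit in the statement, a H\"older-type calibration of the form
\[
D\big(\pr,\hat P(\theta)\big) \le C\, G(\theta)^{b} + \epsilon ,
\]
valid for all $\theta\in\Theta_{\mathrm{inv}}$. Here $\epsilon$ absorbs the approximation error $D(\pr,\hat P(\theta_0))$ of the best-in-class generator, since $\inf_\theta G(\theta)$ need not be zero in divergence terms. For concreteness, consider the case where $\ell$ is a negative log-likelihood or a score-matching / flow-matching loss and $D$ is KL or total variation. For negative log-likelihood, $G(\theta)=\int \mathrm{KL}(\pr_{Y(a)}\,\|\,\hat P_a(\theta))\,d\nu(a) - \inf_{\theta'}(\cdot)$. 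Pinsker then gives $b=1/2$ for total variation, and for diffusion losses the known score-error-to-KL/TV bounds give a comparable power.

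\textbf{Step 2: the triangle-type decomposition.} Writing $D(\pr,\hat P(\hat\theta))$, I would use the triangle inequality (or its quasi-metric analogue) to split it into $D(\pr,\hat P(\theta_0))$ plus a term controlled by $G(\hat\theta)-\inf_\theta G(\theta)$, with the latter passed through the calibration of Step 1. The first term becomes $\epsilon$; it vanishes under realizability of $\Theta_{\mathrm{inv}}$, which is part of \labelcref{ass:consistentInv}.

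\textbf{Step 3: plug in \cref{thm:main}.} On the event of probability $1-\exp(-s)$ supplied by the theorem, I would substitute the excess-risk bound into the calibration inequality. Monotonicity of $x\mapsto x^b$ then gives the displayed bracket raised to the power $b$. The $\delta_{n,\mathrm{inv}}^{1/2}$ appearing in place of $\delta_{n,\mathrm{inv}}^2$ is harmless, since $\delta_{n,\mathrm{inv}}\le 1$ for large $n$ implies $\delta_{n,\mathrm{inv}}^2\le\delta_{n,\mathrm{inv}}^{1/2}$. So the stated bound is a weakening of what Step 3 produces, and I would simply note this.

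\textbf{Main obstacle.} I expect the main obstacle to be Step 1: justifying the calibration inequality uniformly over the intervention measure $\nu$ and the environments $\mathcal E$, because the theorem controls only an $\nu$-averaged risk. I would handle this by defining $D$ as the corresponding $\nu$-averaged divergence and applying Jensen's inequality (for $b\le 1$, concavity lets the power pass outside the integral). If the generator class is not likelihood-based, I would instead state the calibration as an explicit assumption on the pair $(\ell,D)$, with $C$ and $b$ depending only on the loss.
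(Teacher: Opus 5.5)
Your argument matches the paper's: the paper proves the corollary by combining \cref{thm:main} with condition C1 of DoubleGen \citep{Luedtke25}, which plays the role of your Step 1 calibration $D(\pr,\hat P(\theta))\le C\,G(\theta)^{b}+\epsilon$, and then applies monotonicity of $x\mapsto x^{b}$ on the theorem's event of probability $1-\exp(-s)$. Your Step 2 triangle inequality is unnecessary once that calibration is assumed (it would need a different calibration between $\hat P(\theta_0)$ and $\hat P(\hat\theta)$), and your remark that $\delta_{n,\mathrm{inv}}^{2}\le\delta_{n,\mathrm{inv}}^{1/2}$ correctly accounts for the weaker exponent in the statement.
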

This proof follows by combining \cref{thm:main} with \textbf{C1} of DoubleGen \citet{Luedtke25}, and can be specialized to generative models of interest in a similar manner.

\section{Counterfactual World Modeling}
In this section, we extend our framework to World Modeling (WM) tasks. We view the outcome $Y \in \R^{W \times H \times T}$ as a video trajectory under some intervention of interest, such as changing the speed of objects or even the color, and $X$ will be initial information on the video such as initial frame information.

In WM tasks, there may be multiple objects of interest, meaning \ADG alone is not sufficient for the task at hand. Thus, we generalize the discussion above to consider the video trajectory $Y(\bA)$ to be the joint outcome of $m$ units where $\bA$ denoting the joint intervention $\bA = (A_{1} \dots A_m)$. With this view, we are able to extend the theory from \ADG to the WM setup. Our task is to learn the following counterfactual:
$$ \pr[Y(\bA) \mid X]$$

The proposal is found in \cref{algo:inv-interf}. The same elements are present, the difference is that we view all treatments and covariates as vectors and matrices respectively for each cluster, and propose to learn vector and matrix maps. When there is only one object, we recover \ADG. In contrast to standard statistical interference settings, we are not constrained by expressing outcomes and treatments parsimoniously (e.g. taking linear combinations of vectors to reason about, or separating direct/indirect effects), since we utilize large models capable of expressing these sets natively.

\begin{algorithm}[ht!]
\caption{\ADG\ for jointly-determined outcomes. \label{algo:inv-interf}}
\textbf{Require:}
\begin{itemize}[noitemsep]
\item Data $Z_1,\ldots,Z_n \overset{iid}{\sim} P$, where each unit $Z_i=(\bX_i,\,\bA_i,\,Y_i,\,E_i)$ consists of initial-frame information $\bX_i=(X_{ij})_{j\le m}$ describing $m$ objects, a joint allocation $\bA_i=(A_{i1},\ldots,A_{im})$ assigning an intervention to each object, the resulting trajectory $Y_i$, and environment $E_i$
\item Partition unit indices into two folds $Z_n^1, Z_n^2$ of size $\lfloor n/2 \rfloor$
\item Choice of generative modeling framework
\item Representations $\mathcal S$ on $\bX$ and $\mathcal T$ on $\bA$ for the invariance classes
\item Invariance penalty ${\mathcal I}$ and tuning parameter $\lambda_{\mathrm{inv}}\geq 0$
\end{itemize}
\textbf{1. Nuisance and representation estimation.}
For $j \in \{1,2\}$, use units in $Z_n^j$ to estimate:
\begin{enumerate}[noitemsep]
\item Covariate and intervention representations
$S_n^j(\bX)$, $T_n^j(\bA)$.
\item Invariant outcome map: learn the conditional law $\psi_n^j$ targeting
\[
\psi_n^j\big(\cdot \mid S_n^j(\bX),\,T_n^j(\bA)\big) \approx
P\big(Y(\bA) \mid S_n^j(\bX),\,T_n^j(\bA)\big),
\]

\item Universal Riesz representer: estimate $\alpha_n^j$ solving
\begin{equation}\label{eq:rr-interf}
\hat\alpha_n^j = \arg\min_{\alpha \in \mathcal F_\alpha} \E_n \left[ \alpha\big(T_n^j(\bA),S_n^j(\bX),E\big)^2 - 2  M_{\pi^{*}}(Z;\alpha) \right]
\end{equation}
\[
M_{\pi^{*}}(Z;\alpha)=\int \alpha\big(T_n^j(\ba),S_n^j(\bX),E\big) \pi^{*}(d\ba\mid \bX)
\]
\end{enumerate}
\textbf{2. Invariant risk minimization.}
Define $\theta_n$ as the minimizer of
\[
\begin{aligned}
R_{n,\mathrm{inv}}(\theta, S, T) &= \frac{1}{n} \sum_{j=1}^{2} \sum_{z \in Z_n^{3-j}} \int \alpha_n^j\big(T_n^j(\ba),S_n^j(\bx),e\big) \Big[ \ell\big(\theta(c),y\big) - \ell\!\left( \theta(c),\, \psi_n^j(u\mid c) \right) \Big]
\\
&\hspace{2.0cm} + \ell \left( \theta(c),\, \psi_n^j(u\mid c) \right) \Pi(du) + \lambda_{\mathrm{inv}} \widehat{\mathcal I}_{n}^{3-j} (\theta,\psi_n^j,S_n^j,T_n^j)
\end{aligned}
\]
where $z=(\bx,\ba,y,e)$ and $c=\big(S_n^j(\bx),\,T_n^j(\ba)\big)$ is the unit conditioning.

\textbf{3. Return} the counterfactual generator $\tau_{\theta_n}\big(\cdot \mid S_n(\bx),T_n(\ba)\big)$, together with $S_n, T_n$.

\textit{Unbiased stochastic gradients: sample $(j,z,u)$ and environment pairs $(e,e')$.}
\end{algorithm}

\subsection{Theoretical Results}
We view each video as an i.i.d. sample of $m$ clustered outcomes. This is coherent with the partial interference setting, permitting simple porting of \ADG results to our setup.

We make the following assumptions, restated from the previous section under our setup for completeness. Throughout, the i.i.d.\ unit is the \emph{cluster} $Z_i=(\bX_i,\bA_i,Y_i,E_i)$ with joint allocation $\bA_i\in\A^{m_i}$, $m_i\le m$, and a single outcome (frame) $Y_i$. The exposure-reduced intervention is $T_0(\ba)$ and $G_{\ba}$ is the excess risk at joint allocation $\ba$ under policy $\pi^{*}$.

\begin{enumerate}[label=(\textbf{A\arabic*}),leftmargin=*]

    \item \textbf{Existence of oracle minimizer.} \label{ass:existenceWM}
    There exists $\theta_0 \in \arg\min_{\theta\in\Theta}R(\theta)$, with
    $R(\theta)=\E\big[\,\ell(\theta;Y(\bA),\bA)\,\big]$ over clusters and $\pi^{*}$.
    
    \item \textbf{Bounded loss.}\label{ass:bddLossWM}
    There exists $C_2<\infty$ such that for all $(\ba,E)\in\A^{m}\times\mathcal E$,
    $\sup_{\theta\in\Theta}\|\ell(\theta;\cdot,\ba)-\ell(\theta_0;\cdot,\ba)\|_{L_\infty(\pr)}\le C_2$.
    
    \item \textbf{Curvature.}\label{ass:curvatureWM}
    There exists $C_{3}(m)<\infty$ such that
    $\|\ell(\theta;\cdot,\ba)-\ell(\theta_0;\cdot,\ba)\|^2_{L^2(\pr(Y(\ba)))}\le C_3(m)\,G_{\ba}(\theta)$,
    where $\pr(Y(\ba))=\int P(Y\mid\ba,\bx,E)\,dP_{\bX\mid E}$.
    
    \item \textbf{Entropy/localization.}\label{ass:entropyWM}
    $\FClass=\{z\mapsto\ell(\theta;z)-\ell(\theta_0;z):\theta\in\Theta_{\mathrm{inv}}\}$
    has $J(\delta,\FClass_\delta)<\infty$ for small $\delta$; critical radius
    $\delta_{n,\mathrm{inv}}$ is the smallest $\delta$ with $J(\delta,\FClass_\delta)\le\sqrt n\,\delta^2$.
    
    \item \textbf{Conditional mixed-Lipschitz.}\label{ass:mixedLipWM}
    There exists $C_5(m)<\infty$ with
    $\int[\ell(\theta;\psi(u\mid\bx,\ba,E),\ba)-\ell(\theta;\psi_0(u\mid\bx,\ba,E),\ba)]^2\Pi(du)
    \le C_5(m)\,G_{\ba}(\theta)\,d_\Psi(\psi,\Psi_{0,\ba,E})^2$.
    
    \item \textbf{Nuisance convergence rates.}\label{ass:nuisancesWM}
    $\|\hat\alpha-\alpha_0\|_{L^2(\pi_b\times P_{\bX,E})}=O_p(n^{-r_\alpha})$ and
    $d_{\Psi,\mathrm{inv}}(\hat\psi,\Psi_0)=O_p(n^{-r_\psi})$, uniformly over $E$,
    with $\hat\alpha$ the URR from the corrected Riesz loss
    $\arg\min_\alpha\E_n[\alpha^2-2M_{\pi^{*}}(Z;\alpha)]$.
    
    \item \textbf{Invariance regularity.}\label{ass:consistentInvWM}
    Realizability holds and the penalty is learnable: $\exists\,r_{n,\Inv}\to0$ with
    $\sup_{\theta,\psi,T,S}\|\hat\Inv_n(\cdot)-\Inv(\cdot)\|\le r_{n,\Inv}$.
\end{enumerate}

\begin{enumerate}[label=(\textbf{Int\arabic*}),leftmargin=*]
    \item \textbf{Allocation-level overlap.}\label{ass:overlapWM}
    There exists $\nu\ge0$ with $\E_{\pi_b}\big[(\pi^{*}(\bA\mid\bX)/\pi_b(\bA\mid\bX))^{2+\nu}\big]<\infty$, the ratio over the \emph{joint} allocation.
\end{enumerate}

\begin{theorem}[Invariant interference-AutoDoubleGen excess risk]\label{thm:main-interf}
Assume \labelcref{ass:existenceWM}-\labelcref{ass:consistentInvWM}, and allocation overlap \labelcref{ass:overlapWM},
uniformly over $E\in\mathcal E$, with clusters i.i.d.\ and $m$ bounded uniformly over $\mathcal E$.
Then $\hat\theta$ from \cref{algo:inv-interf} satisfies, with probability $1-\exp(-s)$,
\[
G(\hat\theta)-\inf_{\theta\in\Theta}G(\theta) \lesssim \delta_{n,\mathrm{inv}}^2+\frac{s}{n} +\|\hat\alpha-\alpha_0\|_{L^2(\pi_b\times P_{\bX,E})}^2  d_{\Psi,\mathrm{inv}}(\hat\psi,\Psi_{\mathrm{inv}})^2 +\lambda_{\mathrm{inv}}r_{n,\Inv}
\]
where $\delta_{n,\mathrm{inv}}$ is the critical radius and the suppressed constant depends on $m$ through $C_3(m),C_5(m),\|\alpha_0\|$
\end{theorem}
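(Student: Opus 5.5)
The plan is to reduce \cref{thm:main-interf} to \cref{thm:main} by treating each cluster as a single observation whose intervention is the joint allocation. Concretely, we redefine the intervention space as $\tilde{\A}=\A^{m}$, padding clusters with $m_i<m$ by a null symbol. The intervention is then $\tilde A=\bA_i$, the covariate is $\tilde X=\bX_i$, and the target policy is $\pi^{*}$ on $\tilde\A$. Because clusters are i.i.d., the tuple $\tilde Z_i=(\tilde X_i,\tilde A_i,Y_i,E_i)$ is i.i.d. from a single law. We will check that under this relabeling \labelcref{ass:existenceWM}--\labelcref{ass:consistentInvWM} and \labelcref{ass:overlapWM} are exactly \labelcref{ass:existence}--\labelcref{ass:consistentInv} and \labelcref{ass:overlap} with constants $C_3(m),C_5(m)$. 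We will also check that \cref{algo:inv-interf} is literally \cref{algo:inv-main} applied to $\tilde Z$. Partial interference, meaning no cross-cluster spillover, is what makes $Y_i(\bA_i)$ a well-defined potential outcome of the unit-level intervention, so that identification $\pr(Y(\ba))=\int P(Y\mid \ba,\bx,E)\,dP_{\bX\mid E}$ holds under the usual ignorability. Since the paper has not stated that ignorability, we will state it explicitly as part of the setup.

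Having the reduction, we then re-trace the proof of \cref{thm:main} rather than merely cite it, in order to track the $m$-dependence of the constants. The steps are as follows.

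\textbf{Basic inequality.} Optimality of $\hat\theta$ for the cross-fitted risk gives
$G(\hat\theta)\le (\E-\E_n)\hat f_{\hat\theta} + (\E\hat f_{\hat\theta}-\E f_{\hat\theta}) + \lambda_{inv}\cdot(\text{penalty discrepancy})$.

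\textbf{Empirical-process term.} Conditionally on the nuisance fold, $\hat f_\theta$ is a fixed class. Boundedness \labelcref{ass:bddLossWM}, boundedness of $\hat\alpha$ in $L^{2+\nu}$ (which follows from \labelcref{ass:overlapWM} together with the Riesz identification $\alpha_0=\pi^{*}/\pi_b$ on $T_0$-level sets), and curvature \labelcref{ass:curvatureWM} give a Bernstein condition $\|\hat f_\theta\|_2^2\lesssim C_3(m)\|\alpha\|\,G(\theta)$. Talagrand's inequality with the localization \labelcref{ass:entropyWM} then bounds this term by $\tfrac12 G(\hat\theta)+\delta_{n,inv}^2+s/n$.

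\textbf{Bias term.} Using the Riesz defining equation $\E[\alpha_0 h(\bA,\bX)]=\E\int h(\ba,\bX)\pi^{*}(d\ba\mid\bX)$, the first-order terms cancel. The remainder is
$\E[(\hat\alpha-\alpha_0)(\ell(\theta;\psi_0)-\ell(\theta;\hat\psi))]$ minus the same expression at $\theta_0$.
Cauchy--Schwarz and \labelcref{ass:mixedLipWM} bound this by $\|\hat\alpha-\alpha_0\|_2\,\sqrt{C_5(m)G(\hat\theta)}\,d_{\Psi,inv}(\hat\psi,\Psi_{inv})$. The AM--GM inequality then absorbs $\tfrac14G(\hat\theta)$ and leaves the product $\|\hat\alpha-\alpha_0\|^2 d^2$.

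\textbf{Invariance term.} The invariance term is bounded by $2\lambda_{inv}r_{n,\Inv}$ via \labelcref{ass:consistentInvWM}. Realizability, combined with \labelcref{ass:invariance} lifted to the joint representations $S_0(\bX),T_0(\bA)$, ensures the population penalty vanishes at $\theta_0$, so that $\Theta_{inv}$ contains the oracle.

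Rearranging these four bounds gives the claim with constant $\propto C_3(m)+C_5(m)+\|\alpha_0\|$.

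The main obstacle is the second step, specifically the Bernstein condition for the DR loss. There, $\hat\alpha$ multiplies an unbounded-in-$m$ ratio $\pi^{*}(\bA\mid\bX)/\pi_b(\bA\mid\bX)$ over a joint allocation. Overlap at the joint level can degrade exponentially in $m$, so it is exactly here that boundedness of $m$ uniformly over $\mathcal E$ is used to keep $\|\alpha_0\|_{L^{2+\nu}}$ finite uniformly in $E$. The approach will first attempt a H\"older split with exponents $(2+\nu)/\nu$ and $2+\nu$. This trades the boundedness of the loss \labelcref{ass:bddLossWM} against the moment of $\alpha_0$, and yields a variance bound of order $G(\theta)^{\nu/(2+\nu)}$. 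If that loses the fast rate, the plan is instead to clip $\hat\alpha$ at a level $M(m)$ and absorb the clipping bias into the product remainder.
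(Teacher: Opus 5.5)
Your proposal is essentially the paper's proof. The paper likewise treats the cluster $(\bX_i,\bA_i,Y_i,E_i)$ as the i.i.d.\ unit and reruns the single-object argument: it splits $G(\hat\theta)$ into the DR remainder (cluster-level Riesz identity, Cauchy--Schwarz, and \labelcref{ass:mixedLipWM} with $C_5(m)$), the localized empirical-process term (envelope $C_2(1+\hat\alpha)$ controlled by joint overlap and bounded $m$, then curvature with $C_3(m)$), and the penalty term from the basic inequality, and then absorbs the $G(\hat\theta)^{1/2}$ cross terms by Young's inequality.

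The variance (Bernstein) step you single out as the main obstacle is exactly where the paper simply asserts $\|\hat f_\theta\|_2\lesssim C_3(m)^{1/2}G(\theta)^{1/2}$ via H\"older and overlap, so your main line is at the paper's level of rigor. Note, however, that your H\"older fallback gives a rate slower than $\delta_{n,\mathrm{inv}}^2$, and clipping leaves a bias term that does not appear in the statement, so the stated bound effectively relies on $\alpha_0$ and $\hat\alpha$ being bounded.
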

The proof is found in \cref{pf:main-interf}, and largely extends the \ADG result to clustered setups.

\section{Conclusion}

We introduce \ADG, a framework for counterfactual generation under general interventions that combines Riesz regression, causal invariance, and orthogonal statistical learning. Our results show that \ADG controls excess counterfactual risk while preserving a doubly robust product-bias structure and accounting for distribution shifts across environments. We further extend this discussion to outcomes generated by joint interventions, instantiated to world modeling. We do not require new estimands or complex exposure mappings, viewing the joint rollout as an i.i.d. cluster, with rates increasing by factors in $m$. 
\bibliographystyle{apalike}
\bibliography{bib}

\newpage 

\appendix

\begin{center}
    \textbf{SUPPLEMENTARY MATERIAL TO\\ ``\titleString''}\\ \vspace{0.25cm}
    \normalsize Raphael C. Kim$^{1,2}$, Jingsen Zhu$^1$, Ramin Zabih$^1$, and Michele Santacatterina$^2$ \\
    {\small $^1$Cornell Tech, Cornell University, New York, NY}\\
    {\small $^2$Department of Biostatistics, Department of Population Health, New York University Grossman School of Medicine, New York, NY}
\end{center}

\section{Proofs}

\begin{lemma}\label{lemma:localizedEPBd}
    Let $\FClass$ be a function class with measurable envelope function $F$ s.t. $\| F \|_2 < \infty$ and a finite, supremum entropy integral $J(\delta)$ (envelope normalized). Define $\delta_n$ to be the critical radius, or $\delta $ s.t. $J(\delta) \leq \delta^2 \sqrt{n}$. Then, for any $s > 0$, with probability $1-\exp(-s)$, the following holds:
    $$ \sup_{f\in\FClass}\|(\E-\E_n) f \| \lesssim  \delta_n \| f \|_2+\delta_n^2 +\frac s n$$
\end{lemma}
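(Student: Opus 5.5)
The plan is to prove the bound through the standard route for localized empirical processes: a peeling argument over shells of $\|f\|_2$, with Talagrand's concentration inequality on each shell and a maximal inequality in terms of $J$ to bound the expected supremum. Two points need to be settled first. I work with the envelope-normalized class, assuming, as in \labelcref{ass:bddLoss}, that $\|F\|_\infty \lesssim 1$; after rescaling, all $f$ are uniformly bounded. I also use that the statement defines $\delta_n$ as \emph{a} $\delta$ with $J(\delta)\le \delta^2\sqrt n$, not necessarily the smallest one. Since $\delta\mapsto J(\delta)/\delta$ is nonincreasing ($J$ is an integral of a nonincreasing integrand), $J(\delta)/\delta^2$ is decreasing. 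Hence every $\delta\ge\delta_n^\ast$, the minimal solution, also satisfies the defining inequality. This freedom is what lets the $\sqrt{s/n}\,\|f\|_2$ term produced by Talagrand's inequality be absorbed exactly into the stated form.

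\textbf{Step 1 (one shell).} For $r\ge\delta_n^\ast$, set $\FClass(r)=\{f\in\FClass:\|f\|_2\le r\}$. The maximal inequality for uniformly bounded classes gives
\[
\E\sup_{\FClass(r)}|(\E_n-\E)f| \lesssim \frac{J(r)}{\sqrt n}\Big(1+\frac{J(r)}{r^2\sqrt n}\Big) \lesssim \frac{J(r)}{\sqrt n} \le \frac{r}{\delta_n^\ast}\frac{J(\delta_n^\ast)}{\sqrt n}\le r\,\delta_n^\ast .
\]
The second inequality uses $J(r)\le r^2\sqrt n$, and the third uses monotonicity of $J(\delta)/\delta$. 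Bousquet's version of Talagrand's inequality, with variance proxy $r^2$ and sup-norm bound $\lesssim 1$, then gives, with probability $1-e^{-t}$,
\[
\sup_{\FClass(r)}|(\E_n-\E)f|\lesssim r\delta_n^\ast + r\sqrt{t/n}+t/n .
\]

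\textbf{Step 2 (peeling).} Apply Step 1 on the shells $r_k=2^k\delta_n^\ast$, $k=0,1,\dots,K$ with $K\asymp\log(1/\delta_n^\ast)$, which covers $\|f\|_2\le 1$. Use $t_k=s+2\log(k+1)$ on shell $k$, so a union bound leaves total failure probability $\lesssim e^{-s}$; rescaling $s$ by a constant puts this at $e^{-s}$. On the event, a function $f$ with $r_{k-1}<\|f\|_2\le r_k$ satisfies $r_k\le 2\|f\|_2$. For $k=0$ we simply have $r_0=\delta_n^\ast$. Together these give
\[
|(\E-\E_n)f|\lesssim \delta_n^\ast\|f\|_2+(\delta_n^\ast)^2+\sqrt{s/n}\,\|f\|_2+\delta_n^\ast\sqrt{s/n}+\frac{s+\log\log(1/\delta_n^\ast)}{n}.
\]
The $\log\log$ term is dominated by $(\delta_n^\ast)^2\gtrsim 1/n$, since $J(\delta)\ge\delta$. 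It is also harmlessly bounded through $\sqrt{\log k}\,r_k$ terms.

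\textbf{Step 3 (absorption).} Now choose the critical radius $\delta_n:=\max\{\delta_n^\ast,\sqrt{s/n}\}$, which is admissible by the monotonicity remark. With this choice $\sqrt{s/n}\,\|f\|_2\le\delta_n\|f\|_2$, and $\delta_n^\ast\sqrt{s/n}\le\delta_n^2$. The display therefore collapses to $\delta_n\|f\|_2+\delta_n^2+s/n$ uniformly over $\FClass$, which is the claim.

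The main obstacle is exactly this absorption. Without using the freedom in the choice of $\delta_n$, the term $\sqrt{s/n}\,\|f\|_2$ cannot be dominated by $\delta_n\|f\|_2+\delta_n^2+s/n$ when $s\gg n(\delta_n^\ast)^2$. I would make the dependence of $\delta_n$ on $s$ explicit in the write-up, and note that in the downstream use (Term II in \cref{pf:main}) the bound $\delta_n^2\le(\delta_n^\ast)^2+s/n$ is all that enters. The remaining steps are the routine maximal-inequality bookkeeping of Steps 1 and 2.
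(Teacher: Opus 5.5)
Your skeleton is the paper's: the van der Vaart--Wellner local maximal inequality, monotonicity of $J(\delta)/\delta$ to get $\E v_n(r)\lesssim r\delta_n+\delta_n^2$, Talagrand's inequality at each radius, and a union bound over radii. The real difference is how the $r\sqrt{s/n}$ term is absorbed, and there your version is the one that works.

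The paper writes $r\sqrt{s/n}\le r\delta_n+rs/(n\delta_n)$. Once $r\gg\delta_n$ and $s\gg n\delta_n^2$, the leftover $rs/(n\delta_n)$ exceeds each of $s/n$, $\delta_n r$ and $\delta_n^2$ by a large factor, so the stated bound does not follow. With $\delta_n$ the \emph{minimal} critical radius (as in (A4)) the statement is in fact false. For a single Rademacher $f$ one has $J(\delta)=\delta$ and $\delta_n^\ast=n^{-1/2}$. The claim would then give $\sqrt n\,|\E_n f|\lesssim 1+s/\sqrt n$ with probability $1-e^{-s}$, which the CLT contradicts at $s=\sqrt n$.

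Your $s$-dependent choice $\delta_n=\max\{\delta_n^\ast,\sqrt{s/n}\}$ is the standard correct formulation. Since $\delta_n^2\le(\delta_n^\ast)^2+s/n$, it is all that Term II of the main proof uses. Your added hypothesis $\|F\|_\infty\lesssim 1$ is likewise necessary: both tools require a bounded envelope, and the paper uses this tacitly. Note, though, that (A2) does not deliver it for the class $\hat f_\theta$, whose envelope $C_2(1+\hat\alpha)$ need not be bounded under (Int1).

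One step of your peeling is not justified as written. You index the union bound by absolute shell number, with $t_k=s+2\log(k+1)$. This leaves $r_k\sqrt{2\log(k+1)/n}+\log(k+1)/n$ on shell $k\le K\asymp\log(1/\delta_n^\ast)$. The fact $J(\delta)\ge\delta$ only gives $n(\delta_n^\ast)^2\ge1$, which does not dominate $\log(K+1)$. Already for the single Rademacher $f$ above, which sits in the top shell, your shell bound gives $\sqrt{(s+\log\log n)/n}$ rather than $O(\delta_n)$.

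The fix is to take the union bound only over the $K'$ nonempty shells, indexed by rank $j$ with $t_j=s+2\log(j+1)$, and to pay for it with entropy:
\begin{itemize}
\item Pick one function from every other nonempty shell of index $k\ge1$. This gives $\lceil K'/2\rceil$ functions pairwise more than $2\delta_n^\ast$ apart in $L^2(P)$.
\item Hence $\log N(\epsilon)\ge\log\lceil K'/2\rceil$ for $\epsilon\le\delta_n^\ast$.
\item Therefore $\sqrt n(\delta_n^\ast)^2\ge J(\delta_n^\ast)\ge\delta_n^\ast\sqrt{1+\log\lceil K'/2\rceil}$, so $\log(j+1)\lesssim n(\delta_n^\ast)^2$ for all $j\le K'$.
\end{itemize}
Alternatively, enlarge $\delta_n$ by $\sqrt{\log\log n/n}$. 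This is admissible by your monotonicity remark but weakens the statement slightly.
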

\begin{proof}
    Define the localized supremum process at radius $r$ by $v_n(r)=\sup_{f\in\FClass: \|f\|_2\leq r}\|(\E-\E_n) f \| $. Then, by Theorem 2.1 of \citet{vdV11}, we have 
    $$ v_n(r) \lesssim \frac{J(r)}{\sqrt{n}}(1+\frac{J(r)}{\sqrt{n}r^2 \| F\|_2}) \|F \|_2 $$

    Next, we claim that $v_n(r) \lesssim \delta_n^2 + \delta_n r$ for $r \geq 0$. 

    Note that by Lemma S3 of \citet{Luedtke25}, $J(r)/r$ is non-increasing in $r$. Consider when $r\geq \delta_n$. It suffices to show that $(1+\frac{J(r)}{\sqrt{n}r^2})$ is bounded.  Since $\delta_n$ is the critical radius, $J(\delta_n)/\delta_n = \sqrt{n}\delta $ so $J(r)/r \leq J(\delta_n)/\delta_n) \implies J(r)/\sqrt{n} \leq  \delta_n r$ . Hence, dividing by $r^2$ the quantity is bounded, and the upper bound on $\delta_n^2$ follows. When $r < \delta_n$, $v_n(r) \leq v_n(\delta_n) \lesssim \delta_n^2$. This gives the claim.

    By Lemma 3.5.9 of \citet{vdv96} (Talagrand's), with probability $1-\exp(-s)$,
    $$ v_n(r) \lesssim \E v_n(r) + r\sqrt{s/n} + \frac{s}{n} $$
    Now, we can further bound using Young's to show $r \sqrt{s/n} \lesssim r \delta_n + rs/(n \delta_n)$ and the result from the first step showing $v_n(r) \lesssim \delta_n^2 + \delta_n r$. Combining with the union bound on levels of $r$, we conclude the result.
\end{proof}

\begin{lemma}[Invariance Penalty Estimation]\label{lemma:InvCv}
    Suppose we have an invariant measure of the form $\Inv(\theta, \eta) = \Phi(\{\E h(\theta, \eta)\}_{E \in \mathcal{E}} )$ for $\Phi$ an aggregation operator $h$ a function of sample points. Assume $\mathcal{H} = \{ h(\theta, \eta; \cdot) : \theta \in \Theta_{inv}\}$ is uniformly bounded with finite uniform-entropy integral $J(H) < \infty$. Then,
    \begin{align*}
        \sup_{}\| \hat{\Inv}_n(\hat{\theta})-\hat{\Inv}_n(\theta_0)\| = o_p(1)
    \end{align*}
\end{lemma}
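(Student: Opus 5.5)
The plan is to reduce the statement to a uniform law of large numbers for the environment-wise averages, and then pass it through the aggregation operator $\Phi$. Write $\hat{\Inv}_n(\theta,\eta)=\Phi(\{\E_{n,E} h(\theta,\eta)\}_{E\in\mathcal E})$, where $\E_{n,E}$ is the empirical average over observations with $E_i=E$. We then have
\[
\|\hat{\Inv}_n(\hat\theta)-\hat{\Inv}_n(\theta_0)\|\le \|\hat{\Inv}_n(\hat\theta)-\Inv(\hat\theta)\|+\|\Inv(\hat\theta)-\Inv(\theta_0)\|+\|\Inv(\theta_0)-\hat{\Inv}_n(\theta_0)\|.
\]
The first and third terms are bounded by $2\sup_{\theta\in\Theta_{inv}}\|\hat{\Inv}_n(\theta)-\Inv(\theta)\|$. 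The middle term is where invariance enters. By realizability in \labelcref{ass:consistentInv} and \labelcref{ass:invariance}, $\theta_0$ attains the minimal value of $\Inv$, which is zero for penalties measuring discrepancy across environments. The estimator $\hat\theta$ is chosen from $\Theta_{inv}$, on which the population penalty is also at its minimum. So the middle term vanishes, or is absorbed once we restrict to $\Theta_{inv}$ as in the statement.

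For the uniform term, first apply a Glivenko--Cantelli argument per environment. Since $\mathcal H$ is uniformly bounded with finite uniform entropy integral, it is $P$-Glivenko--Cantelli, and in fact Donsker. Hence $\sup_{h\in\mathcal H}|(\E_{n,E}-\E_E)h|=O_p(n_E^{-1/2})$. Alternatively, one can invoke \cref{lemma:localizedEPBd} with $\|f\|_2\le\|F\|_2$, which gives the bound $\delta_n+\delta_n^2+s/n\to0$ with probability $1-e^{-s}$.

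We need $n_E\to\infty$ for each $E$. This follows from $\pr(E=e)>0$ and the law of large numbers. I would take $\mathcal E$ to be finite, or at least assume the constants hold uniformly over $\mathcal E$ as the paper stipulates, so that a union bound over environments is harmless.

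Next I would pass the bound through $\Phi$. This needs a continuity assumption that I would add explicitly: $\Phi$ is (uniformly) continuous, or Lipschitz, in the sup norm over its arguments. Typical choices, such as the variance across environments or the maximum pairwise difference of risks, are Lipschitz on bounded sets, and the arguments are bounded because $\mathcal H$ is uniformly bounded. This gives $\sup_\theta\|\hat{\Inv}_n(\theta)-\Inv(\theta)\|\lesssim \max_E\sup_h|(\E_{n,E}-\E_E)h|=o_p(1)$. The same step also verifies the rate $r_{n,\Inv}$ posited in \labelcref{ass:consistentInv}.

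One subtlety is that $\eta$ is estimated. Under cross-fitting, $\hat\eta^j$ is fixed relative to the evaluation fold, so we can condition on it and apply the argument above with $\eta=\hat\eta^j$. This requires $\mathcal H$ to satisfy the entropy bound uniformly over the nuisance values that $\hat\eta$ can take.

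The main obstacle is the middle term together with the sup over $\hat\theta$. Because $\hat\theta$ is data-dependent, we cannot use pointwise convergence, which is why uniform convergence over $\Theta_{inv}$ is essential. We also must argue that $\Inv(\hat\theta)=\Inv(\theta_0)$, or that the difference is $o_p(1)$. If $\hat\theta$ is only approximately invariant, I would instead use the ERM inequality: $\lambda_{inv}\hat{\Inv}_n(\hat\theta)\le R_n(\theta_0)-R_n(\hat\theta)+\lambda_{inv}\hat{\Inv}_n(\theta_0)$. Combining this with the uniform convergence of $R_n$, which follows from \labelcref{ass:entropy}, would bound $\Inv(\hat\theta)-\Inv(\theta_0)$ by $o_p(1)$.
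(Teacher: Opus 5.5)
The gap is in how you handle the estimated nuisance. Your triangle-inequality split is a good idea. Isolating $\Inv(\hat\theta)-\Inv(\theta_0)$ matches the stated conclusion more closely than the paper's argument, which only shows uniform consistency of $\hat\Inv_n(\theta,\hat\eta)$ for $\Inv(\theta,\eta)$.

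You condition on the cross-fit $\hat\eta^j$ and apply the Glivenko--Cantelli (or localized) bound with $\eta=\hat\eta^j$. This controls $\sup_\theta\|\hat\Inv_n(\theta,\hat\eta)-\Inv(\theta,\hat\eta)\|$. Your population penalty is therefore centered at $\hat\eta$, and your middle term is $\Inv(\hat\theta,\hat\eta)-\Inv(\theta_0,\hat\eta)$.

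Your reason for this term vanishing is realizability plus $Y(a^*)\perp E\mid S_0(X),T_0(a^*)$. That is a property of the true nuisance $\eta_0=(\psi_0,S_0,T_0)$. With estimated representations $\hat S,\hat T$ and an estimated $\hat\psi$, the cross-environment discrepancy need not be zero, or even minimal, at $\theta_0$ or on $\Theta_{inv}$.

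If you instead read $\Inv(\theta)$ as $\Inv(\theta,\eta_0)$, your middle-term argument is valid. But then your outer terms contain a bias that the empirical-process argument does not touch. Cross-fitting only removes the dependence between $\hat\eta$ and the evaluation fold. It does nothing about $\Inv(\theta,\hat\eta)-\Inv(\theta,\eta_0)$.

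You need an extra term $\sup_{\theta\in\Theta_{inv}}|\Inv(\theta,\hat\eta)-\Inv(\theta,\eta_0)|$. Under your Lipschitz assumption on $\Phi$ it reduces to $\max_E\sup_\theta|\E_E[h(\theta,\hat\eta)-h(\theta,\eta_0)]|$. This is exactly the second piece of the paper's split, $\|(\E_n-\E)h(\theta,\hat\eta)\|+\|\E(h(\theta,\hat\eta)-h(\theta,\eta))\|$. The paper controls it by taking $h$ Neyman-orthogonal in $\eta$, orthogonalizing if necessary. The difference is then second order in the nuisance errors and vanishes under the rates in (A6).

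For a bare $o_p(1)$ conclusion, it would also suffice to assume two things: $\eta\mapsto\E_E h(\theta,\eta)$ is continuous uniformly over $\theta\in\Theta_{inv}$, and $\hat\eta$, including $\hat S,\hat T$, is consistent. Either way, some assumption linking $\hat\eta$ to $\eta_0$ must enter, and your proposal has none. Once that term is added, your invariance argument for the middle term can be applied at $\eta_0$, where it holds.

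The same omission affects your ERM fallback. Uniform convergence of the cross-fit DR risk $R_n$ to $R$ does not follow from the entropy condition (A4) alone, since $\E L_{\hat\eta}(\theta)\neq R(\theta)$. The difference is the remainder $\E[(\alpha_0-\hat\alpha)(\zeta_0-\zeta_{\hat\psi})]$, and controlling it needs (A5)--(A6).
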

\begin{proof}
    This setup encompasses various measure such as MMD, CORAL, moment differences, and others \citep{DomainAdaptSurvey23}. For example, for MMD, we may choose $h$ to be the kernel of choice and $\Phi$ to be a pairwise sum over environment discrepancies.

    To prove our result, we will bound the differences using standard empirical process theorems such as the Dudley integral \citep{pollard1990empirical}. More concretely,
    \begin{align*}
        \| \hat{\Inv}_n(\theta, \hat{\eta})-\Inv(\theta,\eta) \| &\lesssim \| \E_n h(\theta, \hat{\eta})-\E h(\theta, \eta) \| \\
        & \lesssim  \| (\E_n -\E ) h(\theta, \hat{\eta}) \| + \| \E (h(\theta, \hat{\eta}) - h(\theta, \eta)) \|
    \end{align*}
    The first term is an empirical process term, and the samples are cross-fit so $\hat{\eta}$ is fixed. Under our controlled entropy assumption on $h$, this is $o_p(1)$. 
    
    For the second term, assume $h$ is neyman-orthogonal with respect to $\eta$ \citep{chern18dml}. Then, we have second order error permitting similar convergence requirements as the DR-approach above yielding an $o_p(1)$ rate. Note, if the function $h$ is not neyman-orthogonal, we may orthogonalize. 
\end{proof} 

\subsection{Proof of \texorpdfstring{Theorem \labelcref{thm:main}}{Theorem}}
\begin{proof}\label{pf:main}

The proof proceeds in four primary steps. First, we employ the basic inequality to decompose the risk into a doubly-robust remainder,  empirical process term, and penalty term. Then, we control each term in three separate steps, with the doubly-robust remainder using orthogonal statistical learning, the empirical process term controlled using chaining, and the penalty using standard maximal inequalities. 

\paragraph{Decomposition}
Define the centered losses by $$f_\theta=L_{\eta_0}(\theta) - L_{\eta_0}(\theta_0),\quad \hat f_\theta = L_{\hat \eta}(\theta) - L_{\hat \eta}(\theta_0) $$

Now, we decompose the loss as follows, adding and subtracting moments

\begin{align*}
    G(\theta) &= \E f_\theta \\
    &= \underbrace{\E[f_{\hat {\theta} }-\hat{f}_{\hat{\theta}}]}_{I: \mbox{ DR-Remainder}} + \underbrace{(\E-E_n) \hat{f}_{\hat{\theta}}}_{II: \mbox{ Empirical Process}} + \underbrace{\E_n \hat{f}_{\hat{\theta}}}_{III: \mbox{ Invariance Penalty}}
\end{align*}

\paragraph{Term I: DR-Remainder}
\begin{align*}
    \E[f_{\hat {\theta} }-\hat{f}_{\hat{\theta}}] &= [\E[L_{\eta_0}(\hat{\theta})] - \E[L_{\hat{\eta}}(\hat{\theta})]- [\E[L_{\eta_0}(\theta_0)]- \E[L_{\hat{\eta}}(\theta_0)]\\
    &= [R(\hat{\theta})-\E[L_{\hat {\eta}}(\hat{\theta})]] + [R(\theta_0)-\E[L_{\hat{\eta}}(\theta_0)]]
\end{align*}

For the first term, 
\begin{align*}
    R(\theta) - \E[L_{\hat{\eta}}] &= \E[\alpha_0 \zeta_0] - \E[\hat{\alpha}(\zeta_0-\zeta_{\hat{\psi}}] - \E[\alpha_0 \zeta_{\hat{\psi}}] \\
    &= \E[(\alpha_0 -\hat{\alpha})(\zeta_0 - \zeta_{\hat{\psi}}] \\
    &\le \big\|\hat\alpha-\alpha_0\big\|\cdot\Big\|\zeta_0-\zeta_{\hat\psi}\Big\| \\
    &\lesssim \|\hat\alpha-\alpha_0 \|\cdot d_{\Psi,\mathrm{inv}}(\hat\psi,\Psi_{\mathrm{inv}})\,G(\theta)^{1/2} 
\end{align*}
where the first inequality follows from Cauchy-Schwarz, the second inequality (final step) follows by \labelcref{ass:mixedLip}. Note that expectations here are once again taken over measures on $\pi_b \times P_{X,E}$. 

\paragraph{Term II: Empirical Process}
The proof proceeds in four steps. First, we demonstrate that there exists a finite envelope function for our function class. Second, we invoke an empirical process bound formalized in \cref{lemma:localizedEPBd}. Third, we refine our bound using \labelcref{ass:curvature}. 

\textbf{Step 1. Finite Envelope.} 
Define the function class $\FClass_{1}= \{ \hat{f}_\theta : \theta \in \Theta_{inv} \}$. This is fixed conditional on the training folds.
Under bounded loss \labelcref{ass:bddLoss}, $\| \hat{f}_\theta\| \leq F = C_2(1 +\hat \alpha(a,X,E))$. By our overlap condition \labelcref{ass:overlap}, the claim follows: $ \| F \|^2 \lesssim (1 + \E[\alpha_0^2]) < \infty$

\textbf{Step 2. Empirical Process Bound.}
With a finite envelope and \labelcref{ass:entropy}, we can invoke Lemma \labelcref{lemma:localizedEPBd} to conclude that with probability $1-\exp(-s)$ for $s>0$, 
$$ \sup_{\theta \in \Theta_{inv}} \| (\E-\E_n) \hat{f}_\theta\| \lesssim  \delta_{n,inv}^2 + \delta_{n,inv} \| \hat{f}_\theta \|_2 + \frac{s}{n}$$

\textbf{Step 3. Refinement.}
We can bound the norm of $\| \hat{f}_\theta \|$ more closely. Using Holder's, we have
$$ \| \hat{f}_\theta \| \lesssim \| \alpha_0 \| \cdot \| (\ell(\theta,\cdot ) - \ell(\theta_0, \cdot))^2 \|$$
By \labelcref{ass:overlap} and \labelcref{ass:bddLoss}-\labelcref{ass:curvature}, $\| \hat{f}_\theta \| \lesssim G(\theta)^{1/2}$ yielding 
$$ \sup_{\theta \in \Theta_{inv}} \| (\E-\E_n) \hat{f}_\theta\| \lesssim  \delta_{n,inv}^2 + \delta_{n,inv} G(\theta)^{1/2} + \frac{s}{n}$$

\paragraph{Term III: Invariance Penalty}
$\hat \theta $ minimizes the empirical risk, so it follows that 
\begin{align*}
    \E_n L_{\hat {\eta}}(\hat {\theta}) + \lambda_{inv} \hat {\mathcal{I}}_n(\hat{\theta}) \leq \E_n L_{ \hat {\eta}}( \theta_0) + \lambda_{inv} \hat {\mathcal{I}}_n(\theta_0)
\end{align*}

We then have,
\begin{align*}
    \E_n \hat{f}_{\hat{\theta}} &\leq \lambda_{inv}(\hat{\Inv}_n(\theta_0) - \hat{\Inv}_n(\hat{\theta}) \\
    &= \lambda_{inv} r_{n,\Inv} \\
    &= o_P(1)
\end{align*}
where the last line follows by Lemma \labelcref{lemma:InvCv}.

\paragraph{Final Bound}
Combining the inequalities above, we have
\begin{align*}
    G(\hat{\theta}) \lesssim \delta_{n,inv} G(\theta)^{1/2} + \| \hat{\alpha} - \alpha_0 \| \cdot d_{\Psi,inv}(\hat{\psi}, \Psi_{inv})  G(\hat{\theta})^{1/2} + \delta_{n,inv}^2 + \frac{s_{inv}}{n} + \lambda_{inv} r_{n,\Inv}
\end{align*}

Note, by Young's, 
$$ \delta_{n,inv} G(\theta)^{1/2} \lesssim G(\hat{\theta}) + \delta_{n,inv}^{1/2} \qt \| \hat{\alpha} - \alpha_0\|  d_{\Psi,inv}(\hat{\psi}, \Psi_{inv})G(\hat{\theta})^{1/2} \lesssim G(\hat{\theta})  + \| \hat{\alpha} - \alpha_0\|^2  d_{\Psi,inv}(\hat{\psi}, \Psi_{inv})^2$$

Hence, we can plug-in these bounds to conclude our final result.

\end{proof}
\subsection{\texorpdfstring{Proof of \cref{thm:main-interf}}{Theorem }}\label{pf:main-interf}
\begin{proof}
The argument follows the four-step structure from \cref{pf:main} but viewing the i.i.d.\ unit as a cluster on $m<\infty$ objects. Two steps require modification: the Riesz bound in Term~I, and the finiteness of the envelope under bounded $m$ and joint-allocation overlap in Term~II.

\medskip
\noindent\textbf{Decomposition (identical to original).}
With centered losses $f_\theta = L_{\eta_0}(\theta)-L_{\eta_0}(\theta_0)$ and
$\hat f_\theta = L_{\hat\eta}(\theta)-L_{\hat\eta}(\theta_0)$, where
\[
L_{\hat\eta}(\theta)(Z_i)
= \hat\alpha\big(T_0(\bA_i),\bX_i,E_i\big)
   \big[\ell(\theta;Y_i,\bA_i)-\zeta_{\hat\psi}(\theta,\bX_i,\bA_i)\big]
   + \zeta_{\hat\psi}(\theta,\bX_i,\bA_i),
\]
$\zeta_{\hat\psi}(\theta,\bx,\ba)=\int\int \ell(\theta;\hat\psi(u\mid\bx,\ba),\ba)\Pi(du)\pi^{*}(d\ba\mid\bx)$,
the basic inequality gives the same three-term split as the original:
\[
G(\hat\theta)
= \underbrace{\E\big[f_{\hat\theta}-\hat f_{\hat\theta}\big]}_{\text{I: DR remainder}}
+ \underbrace{(\E-\E_n)\,\hat f_{\hat\theta}}_{\text{II: empirical process}}
+ \underbrace{\E_n\,\hat f_{\hat\theta}}_{\text{III: invariance penalty}} .
\]
Here, the only difference is that $z=(\bx,\ba,y,e)$ now ranges over clusters and $\ell$ is the single frame-level loss.

\medskip
\noindent\textbf{Term I: DR remainder (Extension under Partial Interference).}
As in the original decomposition, we consider
\[
\E\big[f_{\hat\theta}-\hat f_{\hat\theta}\big]
= \big[R(\hat\theta)-\E L_{\hat\eta}(\hat\theta)\big]
 -\big[R(\theta_0)-\E L_{\hat\eta}(\theta_0)\big].
\]
Now we wish to utilize Riesz identities to simplify this expression into proper cluster-level expectations over the loss functions as:
\[
R(\theta)-\E L_{\hat\eta}(\theta)
= \E[\alpha_0\zeta_0]-\E[\hat\alpha(\zeta_0-\zeta_{\hat\psi})]-\E[\alpha_0\zeta_{\hat\psi}]
= \E\big[(\alpha_0-\hat\alpha)(\zeta_0-\zeta_{\hat\psi})\big],
\]
Then, we may employ Cauchy--Schwarz and the conditional mixed-Lipschitz bound \labelcref{ass:mixedLipWM} (with constant $C_5(m)$) to similarly conclude
\[
\E\big[f_{\hat\theta}-\hat f_{\hat\theta}\big] \le  \|\hat\alpha-\alpha_0\|\cdot\|\zeta_0-\zeta_{\hat\psi}\| \lesssim C_5(m)^{1/2}\,\|\hat\alpha-\alpha_0\|  d_{\Psi,\mathrm{inv}}(\hat\psi,\Psi_{\mathrm{inv}})\;G(\hat\theta)^{1/2},
\]
with all expectations over $\pi_b\times P_{\bX,E}$. 

Indeed, this is what we do, but we remark that our Riesz object is now a cluster level object. I.e.
\begin{equation}\label{eq:riesz-id}
R(\theta)=\E\big[\alpha_0\,\zeta_0\big],
\qquad
\alpha_0=\arg\min_{\alpha}\E\big[\alpha^2-2M_{\pi^{*}}(Z;\alpha)\big],
\end{equation}
where $\zeta_0$ is $\zeta_{\hat\psi}$ evaluated at the true outcome map $\psi_0$ and the functional $M_{\pi^{*}}(Z;\alpha)=\int\alpha(T_0(\ba),\bX,E)\,\pi^{*}(d\ba\mid\bX)$, representing the interference analogue of the single unit object. Note, we have a well-defined object here since $\theta\mapsto R(\theta)$ is a
continuous linear functional of $\psi$ on $L^2(\pi_b\times P_{\bX,E})$ under overlap \labelcref{ass:overlapWM}, so a representer on $(\bA,\bX,E)$ exists and is the minimizer
of the corrected loss.

\medskip
\noindent\textbf{Term II: empirical process (Extension under Partial Interference).}
This term is handled using i.i.d.\ sampling, via van der Vaart--Wellner's local maximal inequality \citep{vdV11} with Talagrand's inequality. Since \emph{clusters} are i.i.d., Lemma~2 applies unchanged. The only thing we must be rigorous about is the envelope function. Conditional on the training fold, $\hat f_\theta$ has, by bounded loss \labelcref{ass:bddLossWM}, envelope $F=C_2\big(1+\hat\alpha\big)$, so
\[
\|F\|_2^2 \;\lesssim\; 1+\E[\alpha_0^2] \;<\;\infty,
\]
This holds because of (i) $m < \infty$ makes $\alpha_0$ a
fixed-dimensional object and joint-allocation overlap \labelcref{ass:overlapWM} controls its second moment; otherwise, this it is difficult to conclude $\E[\alpha_0^2]\le\E_{\pi_b}[(\pi^{*}/\pi_b)^{2}]<\infty$.

With a finite envelope and the entropy/localization condition \labelcref{ass:entropyWM}, Lemma~2 yields, with probability $1-e^{-s}$,
\[
\sup_{\theta\in\Theta_{\mathrm{inv}}}\big\|(\E-\E_n)\hat f_\theta\big\| \lesssim \delta_{n,\mathrm{inv}}^2 + \delta_{n,\mathrm{inv}}\|\hat f_\theta\|_2 + \tfrac{s}{n}.
\]
This final step once again follows the original \ADG proof using curvature \labelcref{ass:curvatureWM} (constant $C_3(m)$): by H\"older and \labelcref{ass:bddLossWM}--\labelcref{ass:curvatureWM}, $\|\hat f_\theta\|_2\lesssim C_3(m)^{1/2}G(\theta)^{1/2}$, hence
\[
\sup_{\theta\in\Theta_{\mathrm{inv}}}\big\|(\E-\E_n)\hat f_\theta\big\| \lesssim  \delta_{n,\mathrm{inv}}^2 + \delta_{n,\mathrm{inv}}\,G(\theta)^{1/2} + \tfrac{s}{n}.
\]

\medskip
\noindent\textbf{Term III: invariance penalty (identical to original).}
Since $\hat\theta$ minimizes the penalized empirical risk,
we have that $\E_n L_{\hat\eta}(\hat\theta)+\lambda_{\mathrm{inv}}\hat\Inv_n(\hat\theta) \le \E_n L_{\hat\eta} (\theta_0)+\lambda_{\mathrm{inv}}\hat\Inv_n(\theta_0)$, yielding
\[
\E_n\hat f_{\hat\theta} \le \lambda_{\mathrm{inv}}\big(\hat\Inv_n(\theta_0)-\hat\Inv_n(\hat\theta)\big) = \lambda_{\mathrm{inv}} r_{n,\Inv} = o_P(1),
\]
where the last step follows from Lemma~3 of the original and invariance regularity \labelcref{ass:consistentInvWM}.

\medskip
\noindent\textbf{Final Bound (identical to original).}
Combining the three terms, 
\[
G(\hat\theta) \lesssim \delta_{n,\mathrm{inv}} G(\hat\theta)^{1/2} + \|\hat\alpha-\alpha_0\|\,d_{\Psi,\mathrm{inv}}(\hat\psi,\Psi_{\mathrm{inv}})\,G(\hat\theta)^{1/2} + \delta_{n,\mathrm{inv}}^2 + \tfrac{s}{n} + \lambda_{\mathrm{inv}} r_{n,\Inv},
\]
with the suppressed constant depending on $m$ through $C_3(m),C_5(m),\|\alpha_0\|$.

Again, similar to the original proof, we apply Young's inequality to the two $G(\hat\theta)^{1/2}$ cross terms to conclude
\[
\delta_{n,\mathrm{inv}}G(\hat\theta)^{1/2}\lesssim \tfrac12 G(\hat\theta)+\delta_{n,\mathrm{inv}}^2, \quad \|\hat\alpha-\alpha_0\|\,d_{\Psi,\mathrm{inv}}\,G(\hat\theta)^{1/2} \lesssim \tfrac12 G(\hat\theta)+\|\hat\alpha-\alpha_0\|^2 d_{\Psi,\mathrm{inv}}^2,
\]
Applying these bounds in a similar manner, we conclude
\[
G(\hat\theta)-\inf_{\theta\in\Theta}G(\theta)
\lesssim 
\delta_{n,\mathrm{inv}}^2+\frac{s}{n}
+\|\hat\alpha-\alpha_0\|_{L^2(\pi_b\times P_{\bX,E})}^2\,
 d_{\Psi,\mathrm{inv}}(\hat\psi,\Psi_{\mathrm{inv}})^2
+\lambda_{\mathrm{inv}}r_{n,\Inv}.
\]
The nuisance rates \labelcref{ass:nuisancesWM} bound the product term by
$O_p(n^{-2(r_\alpha+r_\psi)})$, giving the stated result.
\end{proof}

\end{document}